\newtheorem{theorem}{Theorem}
\newcommand{\bbox}[1]{${\tt box(#1)}$}
\newcommand{\builder}{{\sc QXG-Builder}}
\title{\LARGE \bf
Acquiring Qualitative Explainable Graphs for\\ Automated Driving Scene Interpretation\thanks{This work has received support from the European Union’s Horizon 2020 research and 
innovation programme under grant agreement No 101076911 (AI4CCAM Project)}}
\author{Nassim Belmecheri$^{1}$ and Arnaud Gotlieb$^{1}$ and Nadjib Lazaar$^{2}$ and Helge Spieker$^{1}$%
\thanks{$^{1}$Simula Research Laboratory, Oslo, Norway
        {\tt\small \{nassim,arnaud,helge\}@simula.no}}%
\thanks{$^{2}$LIRMM, U. of Montpellier, CNRS, Montpellier, France
        {\tt\small lazaar@lirmm.fr}}%
}
\begin{document}

\maketitle
\thispagestyle{empty}
\pagestyle{empty}

\begin{abstract}
The future of automated driving (AD) is rooted in the development of robust, fair and explainable artificial intelligence methods. Upon request, automated vehicles must be able to explain their decisions to the driver and the car passengers, to the pedestrians and other vulnerable road users and potentially to external auditors in case of accidents. However, nowadays, most explainable methods still rely on quantitative analysis of the AD scene representations captured by multiple sensors. This paper proposes a novel representation of AD scenes, called {\it Qualitative eXplainable Graph (QXG)}, dedicated to qualitative spatiotemporal reasoning of long-term scenes. The construction of this graph exploits the recent {\it Qualitative Constraint Acquisition} paradigm. Our experimental results on NuScenes, an open real-world multi-modal dataset, show that the qualitative explainable graph of an AD scene composed of $40$ frames can be computed in real-time and light in space storage which makes it a potentially interesting tool for improved and more trustworthy perception and control processes in AD.
\end{abstract}

\section{Introduction}
The development of Automated Vehicles (AVs) has considerably accelerated these last years with the uptake of Artificial Intelligence (AI) methods and Deep Learning (DL) models. For instance, the first level-3 automated system allowing driver's hands-off has recently been released in Europe\footnote{\url{etsc.eu/europes-first-cars-with-level-3-auto...-go-on-sale-in-germany/}}. With the wide adoption of AI, it has become crucial to explain AV automated perception and control mechanisms \cite{Omeiza2022a}, especially those which are constructed with {\it opaque} DL models such as CNNs, RNNs or transformers \cite{Xu2020, Muhammad2021, Kiran2022}. In fact, societal acceptance of AVs significantly depends on these AI models' trustworthiness, transparency and reliability \cite{nastjuk2020}.

Recent years have witnessed flourishing advances in Explainable AI methods dedicated to AVs. According to \cite{Atakishiyev2023}, they can be classified in three main categories: 
\begin{enumerate}[leftmargin=*]
\item Vision-based explanations are concerned with determining which portions of an image influence an AV controller to take an action \cite{Omeiza2022a};
\item Feature importance scores indicate how much each input contributes to the prediction of the model quantitatively;
\item Textual-based explanations provide intelligible arguments to document AV decisions, using a natural language \cite{Kim2021};
\end{enumerate}
Regarding the latter category, even though advanced explanations of driving scenes have been proposed in the literature, there is a lack of automated methods capable of interpreting long-term sequences without the support of human mentors or annotated datasets \cite{Xu2020}. Actually, automated support for scene explanation based on multi-sensors and videos is still restricted to quantitative analysis (e.g., saliency heatmaps) \cite{Omeiza2022a}. 

This paper proposes a novel representation of AD scenes called Qualitative eXplainable Graph (QXG) dedicated to qualitative spatiotemporal reasoning for scene description and interpretation. By using a recent framework named Generic Qualitative Constraint Acquisition (GEQCA) introduced in \cite{Belaid2022}, which can extract qualitative constraints among identified entities in scenes, we propose a light representation of the spatial and temporal relations for an entire scene. Using the multimodal real-world NuScenes dataset \cite{Caesar2020}, we show that QXGs can be computed in real-time for each available LiDAR and camera sensor of an AV~; Thus, they can be used to support its perception and control mechanism. The possible applications of this graph include scene description and interpretation to support perception and control systems, the generation of driving scenarios in description languages and goal recognition (to infer the goal of other vehicles). However, this paper's scope is restricted to the QXG construction and, even though its exploitation is illustrated in an example (in Sec. II), its usefulness is not fully demonstrated here.

To sum up, the contribution of this paper is three-fold:
\begin{enumerate}
    \item We introduce a novel symbolic representation of spatiotemporal relations of driving scenes, dedicated to the perception and control mechanisms of AVs. That representation is sufficiently light to efficiently encode scenes containing $40$ frames and up to $300$ entities per scene;
    \item We present a time- and space-polynomial complexity algorithm which constructs QXGs (cf. Algorithm 2 - \builder). That algorithm uses Generic Qualitative Constraint Acquisition \cite{Belaid2022} which is parametrized by Rectangle Algebra \cite{Balbiani2000}. This algorithm is instrumental to ensure the exploitation of QXGs in realtime;
    \item By using the multimodal NuScenes dataset \cite{Caesar2020}, we show that QXGs can indeed be constructed or updated in less than $0.5$ sec per frame over $850$ video sequences representing a total of more $4$hours and $40\ GB$ of camera and LiDAR data. We also show that the resulting graphs are light and efficient in space storage.
\end{enumerate}

\section{Motivations}
This section introduces QXG and illustrates its usage on a simple scene shown in Fig.~\ref{fig:motive}. For the sake of simplicity, that scene, which is extracted from the NuScenes dataset, is composed of only four frames. 
The QXG captures the exact spatiotemporal relationships between objects of the scene, but there is no quantitative information in the QXG such as distances between objects or speed of objects.
In the scene, there is an ego car ($o_1$) and three other vehicles ($o_2$, $o_3$, and $o_4$). Some objects, like $o_1$ and $o_2$, persist throughout the entire scene, while others exhibit temporal variations. For instance, $o_3$ disappears before the end of the scene, and $o_4$ only appears after the scene's beginning.
All this information is captured by the QXG, along with the different relationships that objects can have with each other frame by frame. For example, between $o_3$ and $o_4$, there is a relationship observed only in frames 2 and 3, and this relationship is the same on both frames.

We have identified several motivations for computing QXG of AD scenes. We discuss here four distinct motivations and include an experimental validation for the first one. The others will be part of our future work.

\begin{enumerate}[leftmargin=*]
    \item  \underline{\bf Efficient Processing and Storage:} QXGs offer benefits in terms of data processing and storage efficiency. By abstracting a long-term scene into high-level qualitative relationships, the volume of information to be processed and stored is significantly reduced as compared to raw sensor data or pixel-based representations. For example, instead of storing and processing detailed pixel information for every frame in a dataset like in NuScenes (which could occupy up to $40\ GB$ of storage), QXGs capture essential spatiotemporal relationship information, resulting in a more compact data representation (in less than $4\ GB$). This reduction in data size enables faster processing and optimization of computational resources, allowing AVs to perform real-time processing and alleviating the computational burden associated with raw sensor data.
   
    \item \underline{\bf Interpretability and Explainability of Long Scenes:} QXGs enable explanations of AV actions because they capture the qualitative relationships of entities within long-term scenes. This human-interpretable form allows for clear insights into the relationships between objects, such as the pedestrian's position, velocity, and intention. By showcasing factors like proximity, traffic rules, and the pedestrian's intention to cross over long-term scene (typically more than 4sec), QXGs provide a basis for understanding and justifying the vehicle's decision to stop for a pedestrian. Ultimately, this fosters increased trust and accountability in AVs.
    
    \item \underline{\bf Learning and Mining from QXGs:} QXGs provide valuable opportunities for learning and mining patterns from AD scenes. By analyzing the qualitative relationships in the graph, data mining techniques can uncover hidden spatial and temporal patterns, correlations, and higher-level knowledge about the objects in the scene. %
    This opens the door for the development of advanced learning and mining techniques that leverage the structured and interpretable nature of QXGs to enhance various aspects of AD, including road user behavior prediction, anomaly detection, scene understanding for connected road infrastructures.

    \item \underline{\bf Enhanced Scene Description \& Scenario Generation:} QXGs, as symbolic representations, have the potential to enhance scene description by providing contextual information to generic large language models (LLMs). By incorporating qualitative relationships that unfold over time, these models can generate more accurate and contextually informed descriptions of scenes.
    For example, ChatGPT fed with the QXG of Fig.~\ref{fig:motive} produces a description like: {\em "In this scene, there are four objects: o1 (the ego car), o2, o3, and o4. Throughout all frames, o1 closely follows o2 on the same lane..."} This enriched description captures the dynamics and relationships among objects, resulting in a more detailed and comprehensive understanding of the scene. Furthermore, from QXGs, it becomes possible to generate automatically driving scenarios by using Scenario Description Language (SDL)~\cite{Zhang2020a}.

\end{enumerate}

\begin{figure}
    \centering
    \includegraphics[width=0.8\columnwidth]{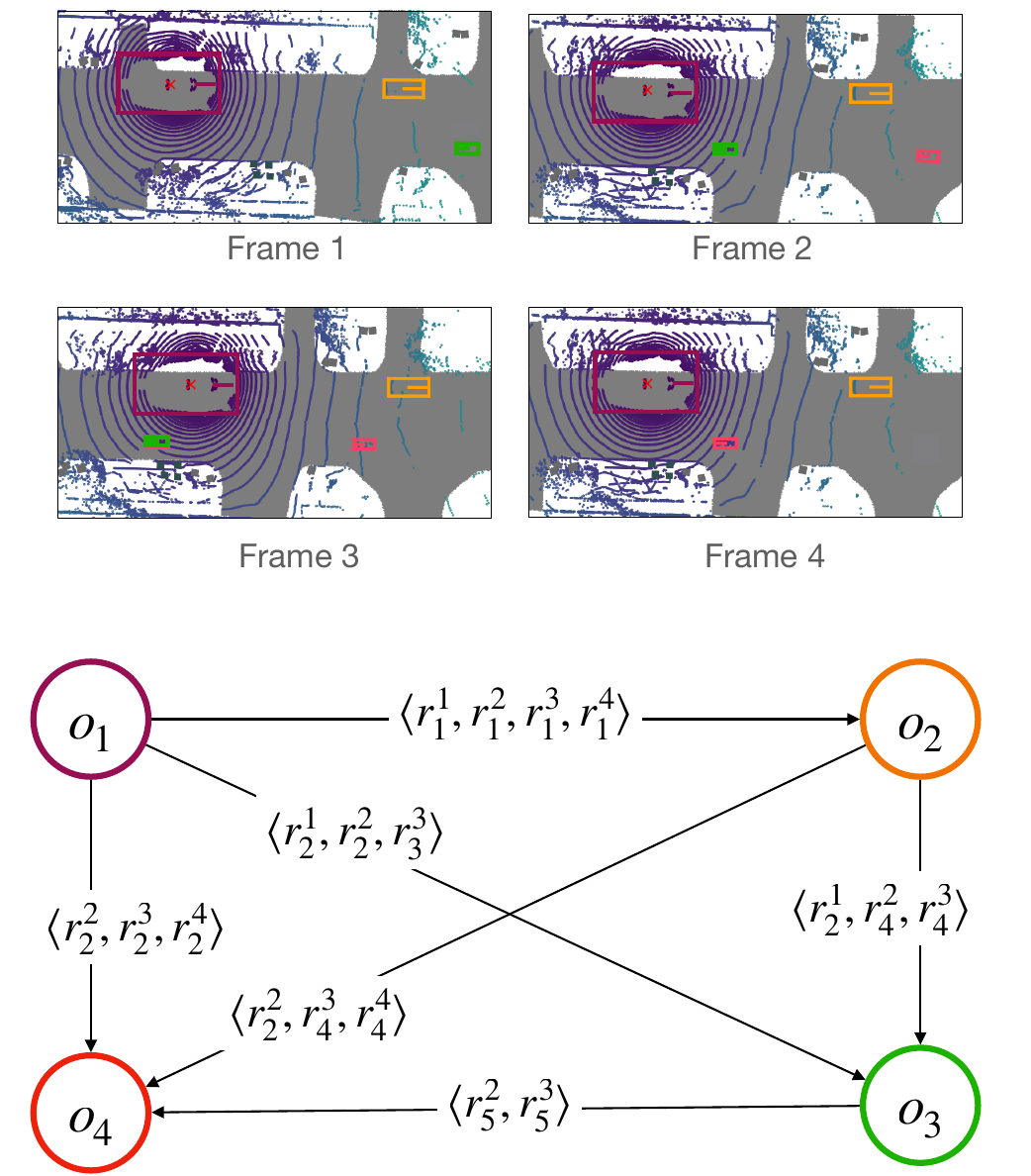}
    \caption{A 4-frames scene example extracted from NuScenes, with its corresponding QXG. Any detected object in the scene is captured by a node and arcs between nodes capture spatiotemporal relations between objects. Superscripts correspond to the frame where the relation $r_i$ holds.}
    \label{fig:motive}
\end{figure}

\begin{figure*}
    \centering
    \includegraphics[width=\textwidth]{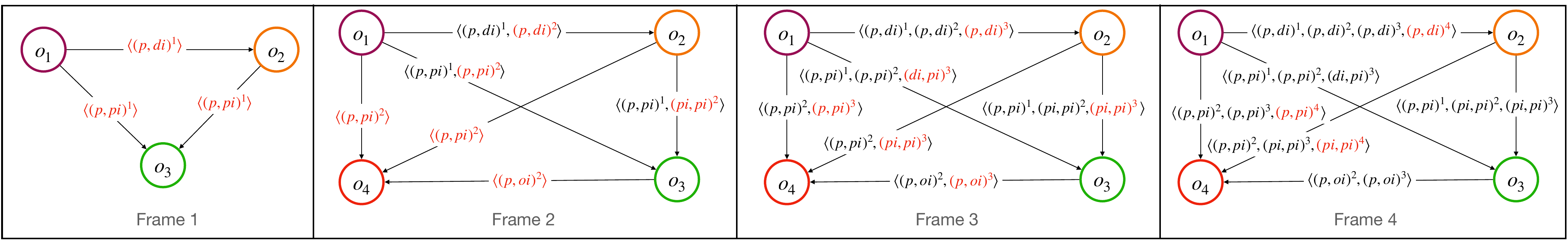}
    \caption{Algorithm \builder{} is used to create and update the QXGs of the scene of Fig.~\ref{fig:motive}. At each frame, qualitative constraint acquisition is used to update the labels of the arc with the corresponding acquired spatiotemporal relations.}
    \label{fig:running}
\end{figure*}

\section{Background}
This section introduces the necessary background on object detection and tracking, qualitative calculus and qualitative constraint acquisition, required to understand the paper.
\subsection{Object Detection and Tracking}
{\em Object detection} involves identifying the presence, position, and type of objects or entities within an image or video. One common approach to object detection is to use convolutional neural networks (CNNs), that can automatically learn to recognize and localize objects using bounding boxes \cite{girshick2014rich,redmon2016you,liu2016ssd}.
{\em Bounding boxes} tightly enclose an object by its top-left corner coordinates $(x,y)$ and its width and height $(w,h)$. Bounding boxes represent an over-approximation of the spatial information of an object, which can be useful for object tracking and classification.
{\em Object tracking} is the process of following a particular object in a video sequence over time \cite{milan2016mot16,zhang2012robust,li2009learning}. 
Note that object tracking can be challenging due to issues such as occlusion, changes in lighting conditions, or object deformation.

In the context of AVs, a scene is a sequence of $n$ frames $f_i$ over time, denoted by $\mathcal{S}=\langle f_1,\ldots, f_n \rangle$. We define the process of object detection-tracking as follows: Given a frame $f_k$ in $\mathcal{S}$, the ${\tt objectDT}(\mathcal{S},f_k)$ function detects the presence of objects in $f_k$ by computing their corresponding bounding boxes and tracks them w.r.t. the previously detected objects in the time-previous frames in $\mathcal{S}$. We assume the existence of a set of $m$ objects $\mathcal{O}=\{ o_1,\ldots o_m\}$ present in $\mathcal{S}$, where $o_i$ appears in at least one frame $f_j\in \mathcal{S}$.

\subsection{Qualitative Calculus and Rectangle Algebra}

{\em Qualitative Calculus (QC)} is a computational method which reasons over the qualitative relationships between physical properties, such as position, velocity, and acceleration, without relying on precise quantitative information. QC is parametrized by an algebra which can be dedicated to only temporal \cite{Allen1983} or spatial relations \cite{Renz2001}, or both \cite{Maddux1994}.
In the context of AVs, qualitative reasoning can be used under the form of ontologies \cite{Geng2017, westhofen2022} or neurosymbolic online abduction \cite{suchan2021} to represent driving scenarios and traffic.

In this paper, we are using {\em Rectangle Algebra (RA)} \cite{Balbiani2000} where
each object is represented as a rectangle, and the relationship between two objects is captured by a constraint on the positions and sizes of those rectangles. There are $169$ permitted relations between two rectangles whose sides are parallel to the axes of some orthogonal basis. 
RA extends Allen's interval algebra \cite{Allen1983} to spatial reasoning by introducing new relations and combining them to describe spatial arrangements.

The operation of composition in RA preserves the concept of {\em weak preconvexity}, which means that all regions connected by a path of relations must be considered part of the same connected component. The operation of intersection preserves the concept of strong preconvexity, which means that if two regions overlap, they must be considered part of the same connected component. 

\noindent The language of Allen's relations consists of 13 relations denoted by $\Gamma=\{{\tt p , m , o , d , s , f , eq , pi , mi , oi , di , si , fi} \}$,\footnote{ [${\tt p}$: precedes; ${\tt m}$: meets; ${\tt o}$ :overlaps; ${\tt d}$ :during; ${\tt s}$ :starts; ${\tt f}$ :finishes; ${\tt eq}$ :equals; ${\tt pi}$ :preceded\ by; ${\tt mi}$ :met\ by; ${\tt oi}$ :overlapped\ by; ${\tt di}$ :contains; ${\tt si}$ :started\ by; ${\tt fi}$ :finished\ by].} which can be used to generate the language of RA 
\begin{wrapfigure}{r}{0.5\columnwidth}
    \centering
    \includegraphics[width=.45\columnwidth]{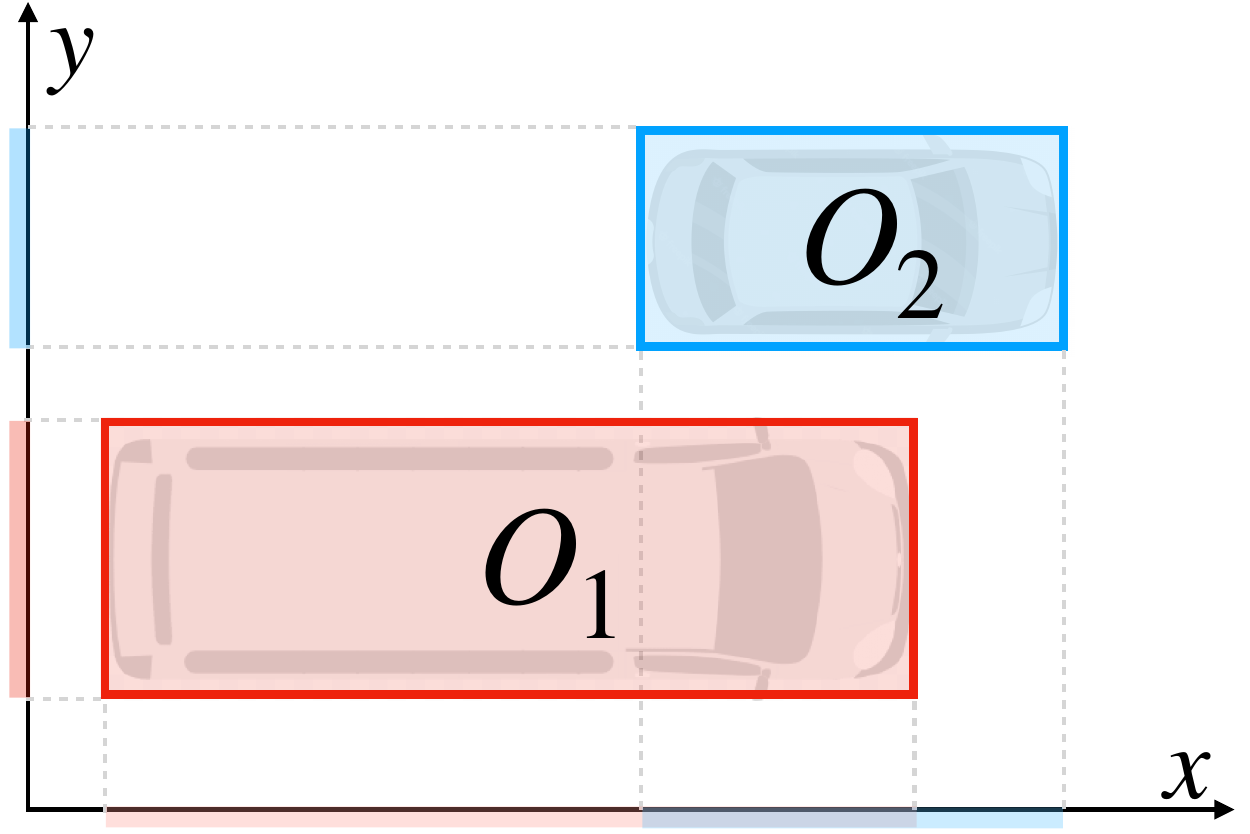}
    \caption{RA example: $O_1\ ({\tt o}, {\tt p})\ O_2$.}
    \label{fig:ra}
\end{wrapfigure}
denoted by $\Phi=\Gamma^2$. 
Overall, RA provides a powerful and flexible tool for reasoning about spatial relations between bounding boxes, as illustrated in Fig.~\ref{fig:ra}.
In this case, vehicle $O_1$ {\em overlaps} $O_2$ on the $x$ axis and {\em precedes} $O_2$ on the $y$ axis: $O_1\ ({\tt o}, {\tt p})\ O_2$.

\subsection{Qualitative Constraint Acquisition}

GEQCA (Generic Qualitative Constraint Acquisition) is a generic and active learning approach for acquiring constraints between pairs of entities using qualitative reasoning \cite{Belaid2022}. 
In this paper, we present a simplified version of GEQCA that illustrates its core functionality (cf. Algorithm \ref{alg:geqca}). GEQCA takes as inputs a set of objects and a language of constraints and outputs a graph of qualitative constraints which represents an equivalence class of learned concepts.
The algorithm starts by initializing a complete graph $G$ with all possible relations between pairs of objects (line~\ref{geqca:init}). Then, for each pair of objects $(X_i, X_j)$ and for each possible relation $r$ between the selected pair, GEQCA presents the relation $r$ to the user through a {\em qualitative query} (line~\ref{geqca:ask}).
By responding "no", the user indicates that the relation does not hold, and then GEQCA removes the relation $r$ from the corresponding edge $E_{ij}$ (line~\ref{geqca:remove}). Subsequently, GEQCA applies path consistency (PC) (line~\ref{geqca:pc}) to eliminate non-feasible relations from the graph $G$. This step indirectly reduces the number of iterations required and minimizes the user's effort in handling queries.

\SetKwInOut{InOutput}{In Out}
\SetKwInOut{Input}{In}
\SetKwInOut{Output}{Out}

\newcommand{\qxg}{\text{\sc QXG}}
\begin{algorithm}

	\LinesNumbered

	\SetAlgoLined
	\Input{set of objects  $X$;
		$\Gamma$ language;  \\}
	\Output{a qualitative graph $G$;}
\BlankLine
	$G \gets (X, \{E_{ij}= \Gamma: i<j \}); $ \label{geqca:init}	 

			\ForEach{$(X_i,X_j)\in X^2: i<j$}	{ \label{tem:loop}

				\ForEach{$r \in E_{ij}$}{ \label{tem:foreach}

					\If{$($ask$(X_i,r,X_j)="no")$}{ \label{geqca:ask}
						$E_{ij} \gets E_{ij} \setminus \{r\}$;  \label{geqca:remove}
						 $PC(G)$; 			 \label{geqca:pc}

					}
				}
					
		}
  \Return {G};

	\caption{GEQCA}\label{alg:geqca}
	
\end{algorithm}

 \section{Qualitative eXplainable Graph Builder}
We present our approach, called \textit{QXG Builder}, which constructs a spatio-temporal explainable graph to represent a scene.
The qualitative explainable graph (QXG) of a scene $\mathcal{S}$ is defined as a pair $(\mathcal{O}, \mathcal{E})$, where $\mathcal{O}$ represents the set of objects and $\mathcal{E}$ represents the set of labeled edges.
The QXG is a graph with labeled vertices, where the function \textit{\bbox{o_i, f_j}} takes an object $o_i \in \mathcal{O}$ and a frame $f_j \in \mathcal{S}$ as input and computes the corresponding bounding box of $o_i$ in $f_j$. This bounding box represents a rectangle that can be projected onto a 2D space.
Additionally, the QXG is a graph with labeled edges. Each pair of objects $(o_i, o_j)$ (where $i < j$) is labeled with a vector $V_{ij}$ of relations from the  RA. More specifically, $V_{ij}[k]$ denotes the atomic relation between the bounding boxes of $o_i$ and $o_j$ at frame $f_k$.

\begin{algorithm}

	\LinesNumbered
\setcounter{AlgoLine}{0}
	\SetAlgoLined
	\Input{Sequence of frames $\mathcal{S}=\langle f_1,\ldots,f_n\rangle$; {\tt // Scene}\\
 RA language $\Phi$;}
	\Output{Qualitative eXplainable Graph $\qxg=(\mathcal{O},\mathcal{E})$;}
\BlankLine
$\qxg\gets (\varnothing, \varnothing)$; \label{qxg:init}

\For{$k\in 1..n$}{ \label{qxg:loop1}
$\Omega_k \gets {\tt objectDT}(\mathcal{S},f_k)$; \label{qxg:DT}

\ForEach{$(o_i, o_j) \in \Omega_k^2 $ , s.t., $i<j$ }{   \label{qxg:loop2}

$G_k\gets ${\sc GEQCA}$(\{o_i,o_j\}, \Phi)$; \label{qxg:geqca} %

${\tt update}(\qxg, G_k)$; \label{qxg:update}

}
}

\Return $\qxg$;

	\caption{\builder}\label{alg:builder}
	
\end{algorithm}

Algorithm \ref{alg:builder} takes a scene $\mathcal{S}$ as input, which is a sequence of $n$ frames, along with $\Phi$ the set of $169$ possible RA relations. The algorithm then constructs and generates the corresponding \qxg.
The algorithm begins with an empty graph (line~\ref{qxg:init}) and iterates over the $m$ frames of $\mathcal{S}$ (line~\ref{qxg:loop1}).
During each iteration, for a given frame $f_k$, \builder{} invokes the ${\tt objectDT}$ function for object detection and tracking (line~\ref{qxg:DT}).
For every pair of objects $(o_i,o_j)$ in frame $f_k$, \builder{} employs {\sc GEQCA} to acquire the appropriate RA relation between $(o_i,o_j)$, and updates the graph accordingly. The graph is then updated accordingly. It is worth noting that since the {\sc GEQCA} call operates on pairs of objects rather than the entire graph, no PC is maintained during the call, as at least three nodes are required in the graph to apply PC.
If the objects are new, they are added as vertices, and the corresponding edge is added or updated with the relation at frame $k$ $V_{ij}[k]$ (lines~\ref{qxg:loop2}-\ref{qxg:update}).

In the {\sc GEQCA} procedure, the role of the human user is replaced by an automated oracle capable of classifying qualitative queries based on the presence or absence of a given relation between object pairs in a specific frame. In our scenario, the high volume of queries is not a concern since a program assumes the user's role in the iterative process.
We will experimentally demonstrate the considerable efficiency of constraint acquisition using GEQCA compared to a brute force approach that requires iterating over all $169$ RA relations for each selected object pair.

\begin{theorem}
Consider a scene $\mathcal{S}$ consisting of $n$ frames depicting $m$ objects. 
Let $DT$ be the time complexity of the detection and tracking function. 
Algorithm \builder{} constructs a QXG with a time complexity of $O(n \times (DT +  m^2))$ and a space complexity of $O(n \times m^2)$.
\end{theorem}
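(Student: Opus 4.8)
The plan is to charge the total running time to the nested-loop structure of Algorithm~\ref{alg:builder} and to show that every elementary operation inside the loops costs either $DT$ or $O(1)$. First I would observe that the outer loop (line~\ref{qxg:loop1}) executes exactly $n$ times, once per frame, and that each iteration performs a single call to ${\tt objectDT}$ (line~\ref{qxg:DT}) costing $DT$, followed by the inner double loop over ordered pairs $(o_i,o_j)$ with $i<j$ drawn from $\Omega_k$. Since $|\Omega_k|\le m$, the number of such pairs is at most $\binom{m}{2}=O(m^2)$, so the body of the inner loop is entered $O(m^2)$ times per frame.

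The crucial step is to bound the cost of a single pass through the inner-loop body, namely one {\sc GEQCA} call (line~\ref{qxg:geqca}) followed by one ${\tt update}$ (line~\ref{qxg:update}). Here I would exploit the fact that {\sc GEQCA} is invoked on the two-object set $\{o_i,o_j\}$. Inspecting Algorithm~\ref{alg:geqca}, its outer loop then ranges over the single pair $(o_i,o_j)$, and its inner loop tests at most $|\Phi|=169$ candidate relations, each via one $\mathrm{ask}$ query and an optional edge update. Because $|\Phi|$ is a fixed constant independent of $n$ and $m$, and because path consistency is never triggered (a two-node graph has too few vertices to propagate, as the text notes), each {\sc GEQCA} call runs in $O(1)$ time; likewise ${\tt update}$ writes the single entry $V_{ij}[k]$ and is $O(1)$. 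Multiplying out gives a per-frame cost of $DT+O(m^2)$ and hence a total time of $O\bigl(n\times(DT+m^2)\bigr)$.

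For the space bound I would simply tally what the final QXG stores. There are at most $m$ vertices, each carrying one bounding box per frame in which the object appears, contributing $O(nm)$ in total; and there are at most $O(m^2)$ edges, each labelled by a relation vector $V_{ij}$ of length $n$ (one atomic relation per frame), contributing $O(nm^2)$. The edge labels dominate, yielding the claimed $O(n\times m^2)$.

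I expect the only real obstacle to lie in the second paragraph: one must argue carefully that the {\sc GEQCA} subroutine contributes merely a constant factor rather than something growing with the problem size. The key observations are that the RA language has fixed cardinality $169$ and that the otherwise potentially expensive path-consistency propagation of Algorithm~\ref{alg:geqca} is vacuous on a two-node input, so the subroutine neither iterates over a growing relation set nor propagates across a growing graph. Everything else reduces to routine loop-counting.
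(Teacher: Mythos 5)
Your proposal is correct and follows essentially the same route as the paper: charge $DT$ once per frame, bound the inner loop by $\binom{m}{2}=O(m^2)$ pairs with each {\sc GEQCA} call costing a constant (at most $169$ relation tests), and bound space by $O(m^2)$ edges each carrying an $n$-length relation vector. Your extra care in arguing that path consistency is vacuous on a two-node input merely makes explicit a remark the paper states in the text preceding the theorem, and your $O(nm)$ accounting for per-frame bounding boxes is a dominated term the paper omits.
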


\begin{proof}
Let $n = |\mathcal{S}|$ denote the number of frames in the scene $\mathcal{S}$, and let $m = |\bigcup \Omega_i|$ represent the total number of objects across all frames, where $\Omega_i = {{\tt objectDT}}(\mathcal{S}, f_i)$.

In Algorithm \builder{}, the main loop iterates over $n$ frames (line \ref{qxg:loop1}). Within each iteration, the ${\tt objectDT}$ function is called, which has a time complexity of $DT$.
The inner loop iterates over $m(m-1)/2$ object pairs, representing all possible combinations of objects in a frame (line \ref{qxg:loop2}). For each object pair, the acquisition step through {GEQCA} is bounded by a maximum of $169$ RA relations. Therefore, the worst-case time complexity of the inner loop is $O(m^2)$.
Combining the time complexities of the main loop and the inner loop, the overall time complexity of Algorithm \builder{} is $O(n \times (DT +  m^2))$.

Now let's consider the space complexity. The space required by the algorithm is determined by the size of the QXG, which consists of vertices and edges. The number of vertices is bounded by the total number of objects across all frames, i.e., $m$. The number of edges is determined by the number of object pairs, which is on the order of $m^2$. 
Each edge in the QXG represents the relation between two objects in a specific frame, resulting in a vector of $n$ possible relations for each object pair. This means that for every object pair, there are $n$ potential relations.
Therefore, considering the number of vertices, edges, and the vector of possible relations, the space complexity of the QXG is $O(n \times m^2)$.

\end{proof} 

It is important to emphasize that in practice, the ${\tt objectDT}$ function does not exceed  $100$ milliseconds \cite{barrera2020birdnet+}.
This makes \builder{} run in cubic time complexity $O(n\times m^2)$.

\section{Running Example}

In this section, we provide a practical example to demonstrate the functionality of our \builder{} algorithm on a specific scene. The scene we consider is depicted in Figure \ref{fig:motive} and consists of four frames. Figure \ref{fig:running} illustrates the step-by-step construction of the QXG, frame by frame.

In the scene, the largest car, shown in purple and maintaining its position, represents a self-driving car equipped with a Top LiDAR sensor. The \builder{} algorithm starts with an empty graph and processes the first frame. In this frame, three objects, namely $\Omega_1=\{o_1,o_2,o_3\}$, are detected.

Next, the algorithm iterates over each pair of objects to acquire the corresponding RA relation at frame 1. Taking the pair $(o_2,o_3)$ as an example, the {\sc GEQCA} component learns that $o_2$ {\em precedes} (resp., {\em is preceded by}) $o_3$ on the $x$-axis (resp., $y$-axis), resulting in an edge $V_{23}=\langle({\tt p},{\tt pi})^1\rangle$ with the exponent $1$ representing frame 1. The QXG is then updated with the objects $\mathcal{O}=\{o_1,o_2,o_3\}$ and the edges $\mathcal{E}=\{(V_{12}, V_{13}, V_{23})\}$.

Moving to frame 2, a new object $o_4$ is detected, and all edges between object pairs are updated with the RA relations holding at frame 2. For example, the edge between $(o_1, o_3)$ is updated with $({\tt p},{\tt pi})^2$, and a new edge is added between $(o_3,o_4)$ in the graph, represented by $V_{34}=\langle({\tt p},{\tt oi})^2\rangle$. Similar updates are performed for the pairs $(o_1,o_4)$ and $(o_2,o_4)$, resulting in new edges added to the QXG due to the presence of the newly detected object $o_4$.

In frame 3, all edges are updated with the corresponding relations that hold between objects at frame 3. For instance, the edge $V_{13}$ between $(o_1,o_3)$ is updated with $({\tt di},{\tt pi})^3$.

Moving to frame 4, all edges are updated except for the ones involving $o_3$, which is no longer present in frame 4.

Finally, the \builder{} algorithm returns a graph containing the four objects and the edges representing the RA relations holding between objects at each frame.

\section{Experiments}
This section presents an experimental evaluation of \builder{} using the NuScenes open real-world dataset. 
\builder{} is implemented in Python; it complete code, along with comprehensive description is available online\footnote{\url{https://github.com/simula-vias/qxg-builder}}.
All tests were performed on a standard machine, an Intel Core i7 processor running at 2.8GHz with 64GB RAM.

\subsection{NuScenes: A Real-world Multimodal AD Dataset}
The NuScenes dataset is a comprehensive dataset designed for AD research \cite{Caesar2020}. It contains real-world sensor data from AVs in diverse urban driving scenarios, along with annotations for object detection, tracking, and other relevant attributes. The dataset includes high-definition maps aligned with the sensor data. It is widely utilized in the development and evaluation of AD algorithms. The dataset consists of multi-sensor data from various sources mounted on a self-driving car, including six cameras providing 360-degree visual coverage, one LiDAR generating detailed 3D point clouds, five RADARs for detecting object velocities, an IMU for tracking vehicle movement, and a GNSS receiver for global positioning data.
The dataset contains $1,000$ real-world scenarios from Singapore and Boston, each lasting $20$ seconds. It provides annotations for $23$ different object types such as cars, pedestrians, trucks, and bicycles, which were crucial for constructing the graph. These annotations can also be predicted using detection algorithms like BirdNet+, which utilize LiDAR data as input \cite{barrera2020birdnet+}. To work with the dataset, a Development Kit (devkit) is provided, offering scripts and functions for parsing, retrieving, and manipulating the necessary elements\footnote{\url{https://github.com/nutonomy/nuscenes-devkit}}.

Our experiments are conducted on a subset of the dataset, specifically the $850$ annotated scenarios. The remaining $150$ scenarios are reserved for testing purposes. Each scenario consists of a LiDAR scene, six camera scenes, and radar data. Each scene comprises $40$ frames, with a capture rate of $2$ frames per second. We execute the \builder{} algorithm on both LiDAR and camera scenes, resulting in the construction of $5,950$ QXGs.
The number of objects in each scene varies, ranging from a minimum of $2$ objects to a maximum of $285$ objects. On average, there are approximately $55$ objects per scene.

The LiDAR sensor offers us a convenient bird's eye view perception, making it suitable for a straightforward application of our approach. However, when working with camera sensor data, we initially obtain a frontal view in a three-dimensional coordinate system. 
To accommodate this data for our method, we perform a simple projection onto a 2D plane. 
This enables us to effectively incorporate camera sensor information into our study.

\subsection{Results}

\begin{table*}
    \centering
    \caption{Graph construction time and final graph sizes for different sensors. Times for Brute Force and QXG-BUILDER are given in milliseconds.}
    \label{tab:sensor_results}
    \begin{tabular}{l|rr|rr|rrrr}
\toprule
Sensor  &  \multicolumn{2}{|c|}{QXG} & \multicolumn{2}{|c|}{Memory (MB)}& \multicolumn{2}{|c}{BF (ms)} & \multicolumn{2}{c}{\builder{(ms)}} \\
 & \#Objects & Density &Scenes& QXG & Avg. & Max & Avg. & Max \\
\midrule
LIDAR & 76 & 85\% &20,800&1,400 & 253.4 & 3,985 & 1.7 & 40 \\
Camera (Front) & 38 & 53\% &4,271&377 & 24.4 & 2,478 & 0.2 & 15 \\
Camera (Front Left) & 28 & 47\% &3,639&225 & 11.2 & 442 & 0.1 & 20 \\
Camera (Front Right) & 28 & 47\% &4,000&234 & 10.0 & 318 & 0.1 & 8 \\
Camera (Back) & 41 & 57\% &3,800&468 & 31.6 & 2,452 & 0.2 & 23 \\
Camera (Back Left) & 27 & 44\% &3,300&225 & 9.1 & 2,531 & 0.1 & 19 \\
Camera (Back Right) & 27 & 44\% &3,800&230 & 7.2 & 468 & 0.0 & 20 \\
\bottomrule
\end{tabular}

\end{table*}

Table \ref{tab:sensor_results} summarizes our results on constructing QXGs from the NuScenes dataset for each sensor (LiDAR and the six cameras). 
The table includes the average number of objects and the density of the QXGs. 
We also provide the memory occupied by the raw data and the memory occupied by the constructed QXGs, measured in MB.
In terms of CPU time, we report the average and maximum execution times in milliseconds for \builder{}, comparing its performance to the Brute Force baseline (BF). The reported time represents the processing time per frame, as \builder{} iteratively constructs the QXG throughout the frames of the scene.
The reported time does not include the detection and tracking time as NuScenes provides precomputed results for these tasks. It is worth noting that the detection and tracking process does not exceed $100$ milliseconds per frame on the NuScenes dataset, using methods such as BirdNet+ \cite{barrera2020birdnet+}.
It is worth noting that the baseline approach does not utilize qualitative acquisition through {\sc GEQCA} and involves checking the $169$ RA relations for each pair of objects.

\begin{figure}
    \centering
    \includegraphics[width=\columnwidth]{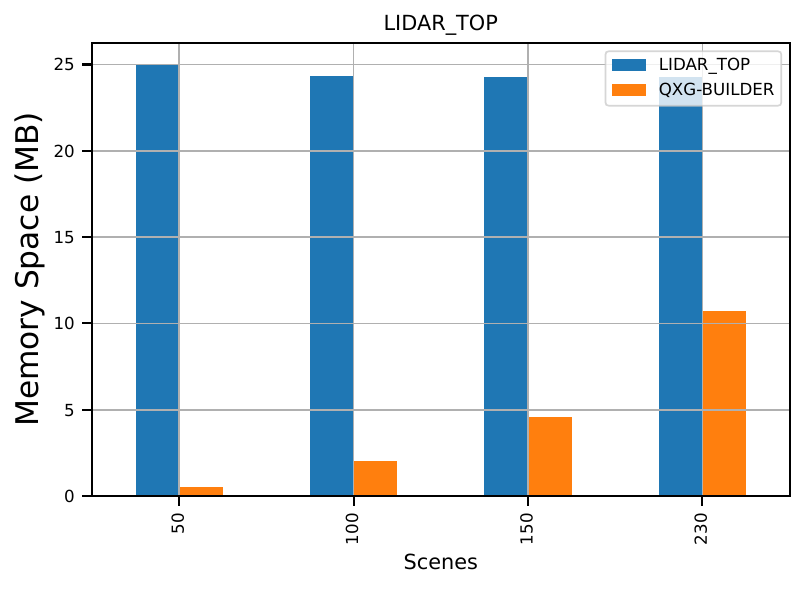}
    \caption{Memory storage for LiDAR data vs QXG.}
    \label{fig:spaceastorage}
\end{figure}

\subsubsection{\underline{\bf Constructed QXGs}}
The first observation that can be drawn from Table \ref{tab:sensor_results} is that the constructed QXGs exhibit high density. This is due to the captured relationships between objects throughout the 40 frames that constitute a scene. Furthermore, it is observed that the LiDAR-based QXGs are twice as dense as the QXGs generated from the camera data. This observation can also be extended to the number of objects, where the LiDAR sensor's 360-degree view allows it to capture all objects around the vehicle simultaneously. Consequently, the LiDAR-based QXGs tend to be richer in information compared to the camera-based QXGs. However, it is important to note that the different QXGs derived from a given scenario can complement each other, resulting in a more comprehensive scene description.

\subsubsection{\underline{\bf Storage}}
The $850\times 7$ scenes in the dataset initially occupy a memory space of $43.3GB$. However, by utilizing \builder{}, we are able to obtain a concise representation of all the scenes through QXGs, resulting in a reduced memory footprint of only $3GB$.
Table \ref{tab:sensor_results} presents the memory occupation in MB per sensor, showcasing a reduction in storage requirements ranging between $88\%$ and $94\%$. Particularly, for the LiDAR sensor, which typically has higher data storage demands, we observe a significant reduction of $93\%$ using our QXG representation.

For a detailed analysis, Fig.~\ref{fig:spaceastorage} illustrates the memory usage of LiDAR data and the corresponding QXGs for four selected scenes. These scenes are chosen based on the total number of objects present, ranging from $50$ to $230$ objects. It is important to note that LiDAR data occupy a consistent amount of memory space across all scenes (approximately 25MB per scene). However, \builder{} generates QXGs with varying memory occupancy depending on the number of objects present in the scene. The reduction in memory occupation is significant, with reductions of $98\%$, $92\%$, $81\%$, and $55\%$ observed for scenes with $50$, $100$, $150$, and $230$ objects, respectively.

The memory data from the $850$ QXGs extracted from LiDAR data allowed us to analyze the memory occupancy trend as a function of the number of objects. The trend can be approximated by the function $f(x) = a + x^b$, where $a = 0.000194425$ and $b = 2.008018$. This function enables us to predict the number of objects required to achieve a QXG size of 25MB (which is the current size of the raw LiDAR scene). To reach a QXG size of $25MB$, a scene would need to have more than $358$ objects.

Furthermore, it is worth mentioning that our approach has a space complexity of $O(n * m^2)$, where $n$ is the number of frames and $m$ is the number of objects. Given that our dataset consists of approximately 40 frames, this results in a quadratic complexity.
This substantial decrease in storage requirements is promising for QXG processing in autonomous vehicles as a viable alternative to storing raw sensor data.

\subsubsection{\underline{\bf CPU Time Processing}}
In the CPU time results section of Table \ref{tab:sensor_results}, a significant observation can be made regarding the reduction in time required to construct QXGs by utilizing qualitative acquisition through {\sc GEQCA}.
For LiDAR data, where the volume of data is substantial, the average processing time per frame without qualitative acquisition is approximately $0.2$ seconds, with a maximum observed time of around 4 seconds. However, by employing {\sc GEQCA}, the maximum observed processing time per frame is reduced to $0.04$ seconds.

These findings highlight the substantial time savings achieved through the use of qualitative acquisition, particularly for LiDAR data, where the reduction in processing time is more pronounced due to the larger data volume.  
\begin{figure}
    \centering
    \includegraphics[width=\columnwidth]{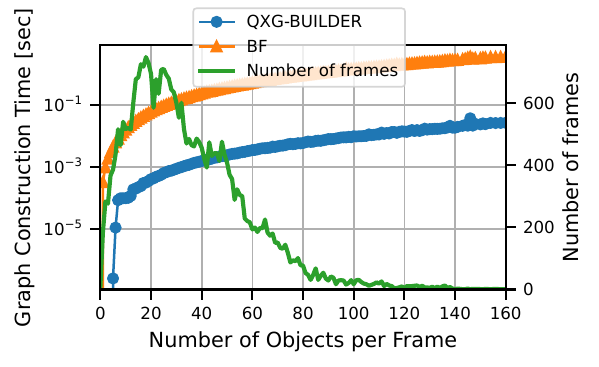}
    \caption{Time per frame with respect to the number of objects (in seconds)}
    \label{fig:graph_construction_individual}
\end{figure}
To support our findings, Fig.~\ref{fig:graph_construction_individual} illustrates the time per frame in seconds as a function of the number of objects (log scale on the y1-axis) and the distribution of objects across the total number of frames (y2-axis) for LiDAR data.
It is worth noting that the complexity of \builder{} is $O(n \times m^2)$, where $n$ represents the number of frames and $m$ represents the number of objects.
The reported time is per frame, and the variation in the number of objects clearly demonstrates the quadratic complexity in $m$. The distance between the two curves represents a constant factor $K$, which is explained by the number of RA relations ($K=169$).
Using {\sc GEQCA}, our \builder{} is able to process a frame in a time bounded above by $40$ milliseconds, meaning that \builder{} can handle at least $25$ frames with over $280$ objects per second.
In contrast, the baseline method, which does not use qualitative acquisition, exceeds one second per frame starting from $100$ objects and reaches up to four seconds for frames with more than $280$ objects.

Note that in Fig.~\ref{fig:graph_construction_individual} the object distribution in the dataset reveals that the majority of frames contain between $10$ and $40$ objects.
For frames within this range, \builder{} requires less than $5$ milliseconds, while the brute force approach takes between $100$ and $200$ milliseconds.

Let us examine a specific LiDAR scene, namely (${\tt scene-0247}$). This scene contains $286$ objects across $40$ frames, and the resulting QXG has a density of $80\%$.
Fig.~\ref{fig:graph_construction_temporal} illustrates the processing time per frame for the BF approach and \builder{}. We can observe a consistent pattern in QXG construction, where the processing time gradually increases for each frame until reaching a point (frame $20$) where the processing time becomes less significant. This can be attributed to the fact that initially, the graph starts with zero objects and gradually increases in terms of nodes.
It is worth noting that \builder{} consistently maintains a processing time below the $40$ milliseconds threshold for all frames. In contrast, the brute force approach exceeds the threshold, reaching the 4-second mark (a factor of $100$ difference) at frame $19$.

In summary, \builder{} offers fast CPU time, enabling it to process multiple frames per second. This makes it ideal for real-time or on-the-fly applications in autonomous vehicles (AVs), allowing for timely analysis and decision-making. Its efficiency in handling computational demands makes \builder{} a valuable tool for enhancing AV performance and safety.
\begin{figure}
    \centering
    \includegraphics[width=\columnwidth]{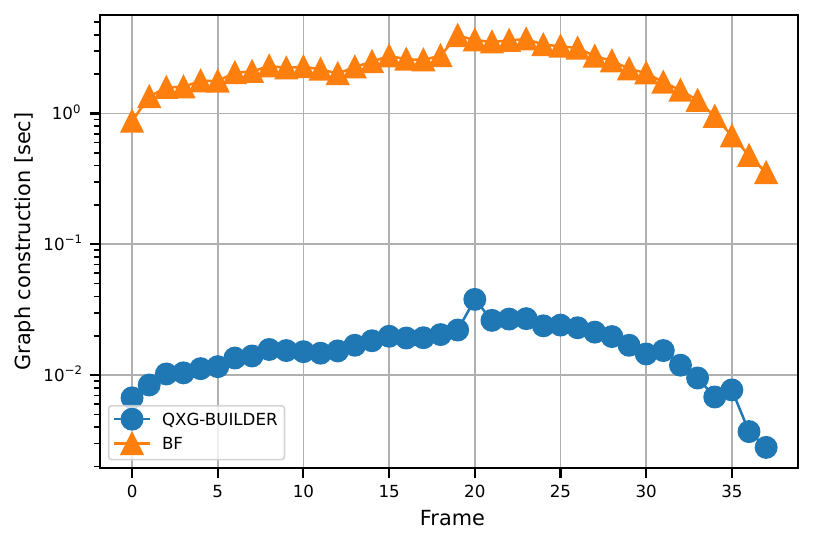}
    \caption{Time per frame of BT and \builder{} on (${\tt scene-0247}$).}
    \label{fig:graph_construction_temporal}
\end{figure}

\section{Related Works}
Our approach uses qualitative reasoning in the context of AVs as done in \cite{Geng2017, suchan2021} and \cite{Hua2022}. Though it completely differs from these works by its unique usage of constraint acquisition which was introduced in GEQCA \cite{Belaid2022}, in querying each frame for object detection and sampling, the overall goal of building QXG remains associated with scene description and interpretation. Another difference is the usage of Rectangle Algebra, which, according to our knowledge, has not been used before for AVs and which seems particularly well fitted.  

The QXG representation of the driving scene is fundamentally different from visual-based representations such as object-centric attention maps, as proposed in \cite{Xu2020, Kim2021}. Indeed, QXGs capture qualitative relationships between entities while saliency maps or CAMs are only based on vision-based detection of importance. Even though, vision-based approaches are capable of deriving textual explanations \cite{kim2018textual}, they can hardly be used to interpret long-term sequences (more than 4 sec). 

Regarding approaches which are not dedicated to AVs, video captioning is a wide area which has been explored in depth with RNNs and attention models. The closest approach is from \cite{Lu2022} where scene-graph guidance and interaction (SGI) is used to guide the textual description of a video. Our approach shares the graph extraction capability but differs in its usage of qualitative calculus which enables the formalization of the generated explanations.     

\section{Conclusion}
In this paper, we have introduced \builder{}, an algorithm specifically designed to construct Qualitative eXplanation Graphs (QXGs) from sensor data in the context of AVs. Our algorithm leverages the power of qualitative constraint acquisition through GEQCA to enhance the interpretability and efficiency of long-term driving scenes.
The computational efficiency of \builder{} is a key advantage, as it enables rapid processing of multiple frames per second. This real-time capability makes it highly suitable for on-the-fly and real-time applications in AVs. By efficiently processing sensor data, \builder{} enables AVs to react promptly to their environment, leading to enhanced performance and safety.
Another significant advantage of \builder{} is its ability to reduce memory storage requirements compared to storing raw sensor data. By constructing QXGs, we represent the scene in a more compact and structured manner, resulting in a substantial decrease in memory occupancy. This reduction in memory usage allows for more efficient storage and retrieval of scene information, which is crucial for AVs with limited computational resources.

The use of QXGs offers numerous benefits for AVs. Firstly, it provides a structured and graph-based representation of the scene, enabling a more intuitive understanding of the decision-making processes involved in AV behaviors. This enhanced interpretability can greatly benefit both developers and end-users, as it allows for more transparent and explainable autonomous systems. Additionally, QXGs facilitate the extraction of valuable insights from the scene, enabling further analysis and learning.

For future work, there are several avenues for exploration. One potential direction is to investigate the scalability of \builder{} for larger and more complex scenes, considering an increasing number of objects and frames. Additionally, the integration of \builder{} with other perception and decision-making modules in AVs can be explored to enhance the overall autonomy and performance of the system. Furthermore, exploring the potential of QXGs for anomaly detection and predictive analysis in AVs could improve safety and robustness.

\bibliographystyle{plain}
\bibliography{biblio}
\end{document}